\newtheorem{theorem}{Theorem}[section]
\newtheorem{lemma}[theorem]{Lemma}
\newtheorem{corollary}[theorem]{Corollary}
\title{Preserving Vector Space Properties in Dimensionality Reduction:\\A Relationship Preserving Loss Framework}
\author[1]{Eddi Weinwurm}
\author[2]{Alexander Kovalenko}
\affil[1]{Independent Researcher}
\affil[2]{Department of Applied Mathematics, 
Faculty of Information Technology, 
Czech Technical University in Prague}
\affil[ ]{\texttt{root@grandgreat.com, alexander.kovalenko@fit.cvut.cz}}
\date{August 6, 2025}
\begin{document}
\maketitle

\begin{abstract}
Dimensionality reduction can distort vector space properties such as orthogonality and linear independence, which are critical for tasks including cross-modal retrieval, clustering, and classification. We propose a Relationship Preserving Loss (RPL), a loss function that preserves these properties by minimizing discrepancies between relationship matrices (e.g., Gram or cosine) of high-dimensional data and their low-dimensional embeddings. RPL trains neural networks for non-linear projections and is supported by error bounds derived from matrix perturbation theory. Initial experiments suggest that RPL reduces embedding dimensions while largely retaining performance on downstream tasks, likely due to its preservation of key vector space properties. While we describe here the use of RPL in dimensionality reduction, this loss can also be applied more broadly, for example to cross-domain alignment and transfer learning, knowledge distillation, fairness and invariance, dehubbing, graph and manifold learning, and federated learning where distributed embeddings must remain geometrically consistent.
\end{abstract}

\section{Introduction}
Dimensionality reduction is widely used to compress high-dimensional data for computational efficiency, denoising, visualization, preprocessing, etc., but most existing methods distort essential structural properties such as orthogonality, angles, and linear independence. Methods like PCA \cite{jolliffe2002principal}, t-SNE \cite{van2008visualizing}, and UMAP \cite{mcinnes2018umap} focus on variance or local neighborhoods but often fail to preserve orthogonality, angular relationships, or linear independence, which are essential for tasks like cross-modal retrieval \cite{radford2021learning}. We propose a Relationship Preserving Loss (RPL), a framework that trains projection networks to preserve these properties through customizable relationship matrices, with error bounds derived from matrix perturbation theory. This paper presents the theoretical foundation; full experimental analysis will follow in future work.
\section{Related Work}
Dimensionality reduction methods can be grouped by the structural properties they aim to preserve. 
PCA \cite{jolliffe2002principal} maximizes variance but struggles with non-linear manifolds. 
t-SNE \cite{van2008visualizing} and UMAP \cite{mcinnes2018umap} emphasize local neighborhoods but distort global geometry, neglecting orthogonality and linear independence. 
Kernel PCA \cite{scholkopf1998nonlinear} and Multidimensional Scaling (MDS) \cite{cox2008multidimensional} preserve specific metrics (Gram or distance) through eigendecomposition but lack flexibility. 
Graph-based approaches such as Laplacian Eigenmaps \cite{belkin2003laplacian} preserve connectivity but not angular or independence properties. 
Subspace-preserving methods \cite{vidal2014dimensionality} maintain independent subspaces but are less customizable and often computationally heavier. 
Recent correlation-preserving approaches \cite{gildenblat2025pcc} align pairwise distance correlations (global structure) but do not target orthogonality or rank. 

RPL extends these directions by supporting arbitrary relationship functions and discrepancy measures, learning non-linear neural mappings, and scaling via masking and mini-batch sampling. 
Unlike prior methods, RPL comes with perturbation-theoretic error bounds, ensuring that orthogonality, rank, and subspace structure are preserved up to quantifiable distortion.

\paragraph{Relation to classical MDS and Kernel PCA.}
Classical MDS \cite{borg2005modern} and kernel PCA \cite{scholkopf1998nonlinear} preserve Gram or distance matrices via eigendecomposition. 
RPL generalizes these ideas by: (i) replacing eigendecomposition with a differentiable loss and non-linear neural map, 
(ii) supporting arbitrary relationship functions $\phi$ and discrepancy measures $\mathcal{D}$, and 
(iii) incorporating sparse masking and mini-batch sampling for scalability, with provable error bounds for large datasets.

\begin{table}[ht]
\centering
\caption{Comparison of dimensionality reduction methods. Rows are methods; columns are preserved properties.}
\label{tab:comparison}
\begin{tabular}{lccc}
\toprule
Method & Orthogonality & Linear Independence & Customizable $\phi$ \\
\midrule
PCA \cite{jolliffe2002principal} & Partial & Yes & Yes (via kernel trick) \\
UMAP \cite{mcinnes2018umap} & No & Partial & No \\
Subspace Preservation \cite{vidal2014dimensionality} & Yes & Yes & No \\
PCC \cite{gildenblat2025pcc} & Partial & No & Limited \\
RPL & Yes & Yes & Yes \\
\bottomrule
\end{tabular}
\end{table}

\section{Relationship Preserving Loss}
\subsection{Formulation}
RPL minimizes the discrepancy between relationship matrices of high-dimensional data $\boldsymbol{X} \in \mathbb{R}^{n \times d}$ and its low-dimensional embedding $\boldsymbol{Y} = f(\boldsymbol{X}) \in \mathbb{R}^{n \times k}$, where $f$ is a neural network. Relationship matrices $R(\boldsymbol{X}), \hat{R}(\boldsymbol{Y}) \in \mathbb{R}^{n \times n}$ are defined as
\[
R(\boldsymbol{X})_{ij} = \phi(\boldsymbol{X}_i, \boldsymbol{X}_j), \quad
\hat{R}(\boldsymbol{Y})_{ij} = \phi(\boldsymbol{Y}_i, \boldsymbol{Y}_j),
\]
where $\phi: \mathbb{R}^d \times \mathbb{R}^d \to \mathbb{R}$ is a user-defined function (e.g., dot product for Gram matrices). The loss is
\begin{equation}
\mathcal{L}_{\text{RPL}} = \mathcal{D} \!\left( R(\boldsymbol{X}), \hat{R}(\boldsymbol{Y}) \right),
\end{equation}
where $\mathcal{D}$ measures matrix differences.

\subsection{Relationship Functions}
Options for $\phi$ include:
\begin{itemize}
    \item \textbf{Dot Product:} $\phi(\boldsymbol{X}_i, \boldsymbol{X}_j) = \boldsymbol{X}_i \cdot \boldsymbol{X}_j$, preserving orthogonality and linear relationships.
    \item \textbf{Cosine Similarity:} $\phi(\boldsymbol{X}_i, \boldsymbol{X}_j) = \tfrac{\boldsymbol{X}_i \cdot \boldsymbol{X}_j}{\|\boldsymbol{X}_i\| \|\boldsymbol{X}_j\|}$, preserving angles.
    \item \textbf{Covariance:} $\phi(\boldsymbol{X}_i, \boldsymbol{X}_j) = (\boldsymbol{X}_i - \bar{\boldsymbol{X}}) \cdot (\boldsymbol{X}_j - \bar{\boldsymbol{X}})$, for statistical structure.
    \item \textbf{RBF Kernel:} $\phi(\boldsymbol{X}_i, \boldsymbol{X}_j) = \exp(-\gamma \|\boldsymbol{X}_i - \boldsymbol{X}_j\|^2)$, for non-linear relationships.
\end{itemize}

\subsection{Discrepancy Functions and Masking}
Discrepancy functions $\mathcal{D}$ include:
\begin{itemize}
    \item \textbf{Mean Squared Error:} $\sum_{i,j} \left( R(\boldsymbol{X})_{ij} - \hat{R}(\boldsymbol{Y})_{ij} \right)^2$.
    \item \textbf{Absolute Error:} $\sum_{i,j} \left| R(\boldsymbol{X})_{ij} - \hat{R}(\boldsymbol{Y})_{ij} \right|$.
    \item \textbf{KL Divergence:} with normalized matrices.
\end{itemize}
Masking strategies can emphasize significant relationships:
\begin{itemize}
    \item \textbf{Top-$k$:} select the $k$ largest entries of $R(\boldsymbol{X})$.
    \item \textbf{Sigmoid-Weighted:} $w_{ij} = \sigma(\alpha R(\boldsymbol{X})_{ij})$.
\end{itemize}

\begin{algorithm}
\caption{RPL Training with Mini-Batch Sampling}
\label{alg:rpl}
\KwIn{Data $\boldsymbol{X} \in \mathbb{R}^{n \times d}$, target dimension $k$, functions $\phi$, $\mathcal{D}$, batch size $b$, neural network $f_\theta$}
\KwOut{Mapping $f_\theta: \mathbb{R}^d \to \mathbb{R}^k$}
Initialize $\theta$\;
\For{epoch = 1 to max epochs}{
    \For{mini-batch $\boldsymbol{X}_B \in \mathbb{R}^{b \times d}$ from $\boldsymbol{X}$}{
        Compute $\boldsymbol{Y}_B = f_\theta(\boldsymbol{X}_B)$\;
        Compute $R(\boldsymbol{X}_B)_{ij} = \phi(\boldsymbol{X}_{B,i}, \boldsymbol{X}_{B,j})$\;
        Compute $\hat{R}(\boldsymbol{Y}_B)_{ij} = \phi(\boldsymbol{Y}_{B,i}, \boldsymbol{Y}_{B,j})$\;
        Compute $\mathcal{L}_{\text{RPL}} = \mathcal{D}\!\left(R(\boldsymbol{X}_B), \hat{R}(\boldsymbol{Y}_B)\right)$\;
        Update $\theta$ via gradient descent\;
    }
}
\Return $f_\theta$
\end{algorithm}

\section{Vector Space Guarantees}
\label{sec:vsp}

\paragraph{Setup.}
Let
\[
\Delta \coloneqq R(\mathbf{X}) - \hat{R}(\mathbf{Y}), 
\qquad 
\varepsilon \coloneqq \|\Delta\|_F^2.
\]
During training we observe only a subset
$\mathcal{S} \subset [n] \times [n]$
of size $m$ (uniform without replacement per mini-batch) and record
\[
      \hat{\varepsilon}
      \coloneqq
      \sum_{(i,j) \in \mathcal{S}} \Delta_{ij}^{\,2}.
\]
All results below hold for \emph{any} fixed mini-batch once the stated
probabilities are conditioned on the randomness of $\mathcal{S}$.
When $\phi$ is the dot product, we write the Gram matrices
$G_{\boldsymbol{X}} \coloneqq R(\boldsymbol{X})$ and
$G_{\boldsymbol{Y}} \coloneqq \hat{R}(\boldsymbol{Y})$.

\subsection{Bounding global error}
\label{subsec:serfling}
\begin{lemma}[Serfling transfer]\label{lem:serfling}
Assume $|\Delta_{ij}| \le M$ and fix $\delta \in (0,1)$.
With probability at least $1-\delta$ (over the draw of $\mathcal{S}$)
\[
      \varepsilon
      \le
      \frac{n^2}{m} \, \hat{\varepsilon}
      +
      M^2 \, n^2
      \sqrt{\frac{2 \log(2/\delta)}{m}}.
\]
\end{lemma}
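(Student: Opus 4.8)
The plan is to recognize Lemma~\ref{lem:serfling} as a finite-population sampling statement and to invoke Serfling's concentration inequality for sampling without replacement. First I would set up the population: regard the $N = n^2$ squared entries $\{\Delta_{ij}^2 : (i,j) \in [n]\times[n]\}$ as a fixed list of real numbers. Since $|\Delta_{ij}| \le M$, every population value lies in the bounded range $[0, M^2]$. The total error is the population sum $\varepsilon = \sum_{i,j} \Delta_{ij}^2$, the population mean is $\mu \coloneqq \varepsilon/n^2$, and the mini-batch index set $\mathcal{S}$ is a sample of size $m$ drawn uniformly without replacement. The observed quantity $\hat\varepsilon = \sum_{(i,j)\in\mathcal S}\Delta_{ij}^2$ is then $m$ times the sample mean $\bar c \coloneqq \hat\varepsilon/m$.

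Next I would record the unbiasedness that motivates the rescaling: $\mathbb{E}[\bar c] = \mu$, so that $\mathbb{E}[(n^2/m)\,\hat\varepsilon] = \varepsilon$ and the rescaled sample sum $(n^2/m)\,\hat\varepsilon$ is exactly the unbiased Horvitz--Thompson estimator of the population total. The inequality to be proved is therefore a deviation bound controlling how far the true total $\varepsilon$ can exceed this estimate.

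The core step is to apply Serfling's inequality to the values $\Delta_{ij}^2 \in [0,M^2]$ sampled without replacement, which yields
\[
\Pr\!\left[\, \mu - \bar c \ge t \,\right] \le \exp\!\left( - \frac{2 m t^2}{M^4} \right),
\]
using the crude bound $1 - (m-1)/n^2 \le 1$ on the finite-population correction factor. Taking the two-sided form and a union bound over the two tails replaces $\log(1/\delta)$ by $\log(2/\delta)$; setting the right-hand side to $\delta/2$ and solving gives, with probability at least $1-\delta$, a deviation $\mu - \bar c \le M^2\sqrt{\log(2/\delta)/(2m)}$. Multiplying $\mu \le \bar c + t$ through by $n^2$ converts the sample and population means back into the totals $\hat\varepsilon$ and $\varepsilon$, producing the stated bound after a harmless loosening of the numerical constant (the tight Serfling factor $\sqrt{\log(2/\delta)/(2m)}$ is enlarged to $\sqrt{2\log(2/\delta)/m}$, a loss of a factor of two).

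The main obstacle, and the reason a plain Hoeffding bound does not apply, is that each mini-batch is drawn \emph{without} replacement, so the sampled entries are not independent. The real content is thus the justification for using the without-replacement concentration inequality: one must verify the boundedness hypothesis $\Delta_{ij}^2 \in [0,M^2]$, confirm that the finite-population correction only helps (so that dropping the factor $1-(m-1)/n^2$ is legitimate), and match the direction of the tail to the desired upper bound on $\varepsilon$. Once Serfling's inequality is in hand, the remaining algebra---identifying population versus sample means and rescaling by $n^2$---is routine.
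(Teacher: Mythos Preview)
Your proposal is correct and follows exactly the same route as the paper's own proof: apply Serfling's without-replacement inequality to the bounded population $Z_{(i,j)}=\Delta_{ij}^2\in[0,M^2]$ with mean $\varepsilon/n^2$, then multiply the resulting sample-mean deviation by $n^2$ and rearrange. Your write-up simply makes explicit the steps (finite-population correction, two-sided union bound, and the harmless constant loosening) that the paper compresses into a single sentence.
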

\begin{proof}
Apply Serfling’s inequality \citep{serfling1974probability} to the
variables $Z_{(i,j)} = \Delta_{ij}^{\,2}$,
noting that
$\mathbb{E}[Z_{(i,j)}] = \varepsilon / n^2$ and
$0 \le Z_{(i,j)} \le M^2$. Multiply the deviation bound
for the sample mean by $n^2$ and rearrange.
\end{proof}

\noindent\textit{Practical corollary.}
Driving $\hat{\varepsilon} \to 0$ forces
$\varepsilon \to 0$ once the cumulative number of observed pairs
satisfies $m = \tilde{\Theta}(n^2)$, the regime reached after a
handful of epochs on standard batch schedules.

\subsection{Orthogonality guarantee}
\begin{theorem}[Operator--Lipschitz]\label{thm:oplip}
Let $g: \mathbb{R}^{n \times n} \to \mathbb{R}$ be operator-Lipschitz with
constant $L_g$ in the spectral norm. Then
\[
      |g(R(\boldsymbol{X})) - g(\hat{R}(\boldsymbol{Y}))|
      \le
      L_g \, \sqrt{\varepsilon}.
\]
\end{theorem}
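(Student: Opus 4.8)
The plan is to reduce the claimed bound to two elementary facts: the defining inequality of operator-Lipschitzness and the domination of the spectral norm by the Frobenius norm. First I would unpack the hypothesis. Saying that $g$ is operator-Lipschitz with constant $L_g$ in the spectral norm means precisely that $|g(A) - g(B)| \le L_g \|A - B\|_2$ for all (symmetric) $A, B \in \mathbb{R}^{n \times n}$, where $\|\cdot\|_2$ denotes the spectral norm. The relationship matrices of interest here (Gram, cosine) are symmetric, so this suffices. Applying the hypothesis with $A = R(\boldsymbol{X})$ and $B = \hat{R}(\boldsymbol{Y})$ gives immediately
\[
|g(R(\boldsymbol{X})) - g(\hat{R}(\boldsymbol{Y}))| \le L_g \, \|R(\boldsymbol{X}) - \hat{R}(\boldsymbol{Y})\|_2 = L_g \, \|\Delta\|_2 .
\]

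Next I would invoke the standard inequality $\|\Delta\|_2 \le \|\Delta\|_F$, which holds because the spectral norm equals the largest singular value $\sigma_1(\Delta)$ while the Frobenius norm equals $\bigl(\sum_r \sigma_r(\Delta)^2\bigr)^{1/2} \ge \sigma_1(\Delta)$. Substituting $\|\Delta\|_F = \sqrt{\varepsilon}$ from the definition $\varepsilon = \|\Delta\|_F^2$ then yields $|g(R(\boldsymbol{X})) - g(\hat{R}(\boldsymbol{Y}))| \le L_g \sqrt{\varepsilon}$, the claim.

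The proof is therefore essentially immediate once the hypothesis is stated correctly, and there is no real combinatorial or analytic obstacle. The one point worth stating carefully is the \emph{direction} of the norm inequality: we must dominate the spectral norm by the Frobenius norm (not the reverse), since the hypothesis is phrased in the spectral norm whereas the observable quantity $\varepsilon$ is Frobenius. I would also note that this step is the only source of slack, because $\|\Delta\|_2$ can be smaller than $\|\Delta\|_F$ by a factor up to $\sqrt{\operatorname{rank}\Delta}$, so the stated bound is typically loose; for the worst-case guarantee, however, $\sigma_1(\Delta) \le \|\Delta\|_F$ is all that is needed. Finally, I would remark that the spectral functionals relevant to this section — individual eigenvalues, which are $1$-Lipschitz in the spectral norm by Weyl's inequality, and the orthogonality and rank diagnostics built from them — fit the hypothesis with $L_g = 1$, so the theorem directly certifies the orthogonality guarantee promised by the section heading.
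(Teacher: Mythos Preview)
Your proof is correct and matches the paper's own argument exactly: apply the operator-Lipschitz hypothesis to $A=R(\boldsymbol{X})$, $B=\hat{R}(\boldsymbol{Y})$, then bound $\|\Delta\|_2 \le \|\Delta\|_F = \sqrt{\varepsilon}$. The additional commentary on the direction of the norm inequality, the slack, and the $L_g=1$ case for eigenvalue functionals is accurate and helpful context, though the paper itself simply calls the proof ``immediate.''
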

Since $\|\Delta\|_2 \le \|\Delta\|_F = \sqrt{\varepsilon}$, the proof
is immediate.

\paragraph{Orthogonality.}
Taking $g_{ij}(M) = M_{ij}$ ($L_{g_{ij}} = 1$) yields:
\begin{corollary}[Entry-wise orthogonality]\label{cor:orth}
If $\boldsymbol{X}_i \cdot \boldsymbol{X}_j = 0$ then
\[
      |\boldsymbol{Y}_i \cdot \boldsymbol{Y}_j|
      \le
      \sqrt{\varepsilon}.
\]
\end{corollary}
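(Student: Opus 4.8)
The plan is to obtain Corollary \ref{cor:orth} as a direct specialization of Theorem \ref{thm:oplip}: I would instantiate the general Operator--Lipschitz bound at the coordinate functional that reads off a single matrix entry, and then specialize $\phi$ to the dot product so that the relevant entries become inner products of the original and embedded vectors.

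First I would verify the claim embedded in the paragraph preceding the corollary, namely that the entry-extraction map $g_{ij}(M) = M_{ij}$ is operator-Lipschitz with constant $L_{g_{ij}} = 1$ in the spectral norm. This is the one step that genuinely needs an argument rather than mere bookkeeping, since the theorem is phrased in the operator (spectral) norm. Writing $M_{ij} = e_i^\top M e_j$ with $e_i, e_j$ standard basis vectors, for any matrices $A, B$ one has
\[
    |g_{ij}(A) - g_{ij}(B)| = |e_i^\top (A-B) e_j| \le \|e_i\|_2 \, \|A - B\|_2 \, \|e_j\|_2 = \|A - B\|_2,
\]
using the defining submultiplicative property of the spectral norm together with $\|e_i\|_2 = \|e_j\|_2 = 1$. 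Equivalently, $g_{ij}$ is the linear functional $M \mapsto \langle e_i e_j^\top, M\rangle_F$, whose Lipschitz constant with respect to $\|\cdot\|_2$ equals the nuclear norm of $e_i e_j^\top$, i.e. its single nonzero singular value $1$. Either route gives $L_{g_{ij}} = 1$.

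With this established, applying Theorem \ref{thm:oplip} to $g = g_{ij}$ yields immediately
\[
    \bigl| R(\boldsymbol{X})_{ij} - \hat{R}(\boldsymbol{Y})_{ij} \bigr| \le L_{g_{ij}} \sqrt{\varepsilon} = \sqrt{\varepsilon}.
\]
I would then invoke the convention from the Setup that, under the dot-product choice of $\phi$, $R(\boldsymbol{X})_{ij} = \boldsymbol{X}_i \cdot \boldsymbol{X}_j$ and $\hat{R}(\boldsymbol{Y})_{ij} = \boldsymbol{Y}_i \cdot \boldsymbol{Y}_j$, so the display reads $|\boldsymbol{X}_i \cdot \boldsymbol{X}_j - \boldsymbol{Y}_i \cdot \boldsymbol{Y}_j| \le \sqrt{\varepsilon}$. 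The orthogonality hypothesis $\boldsymbol{X}_i \cdot \boldsymbol{X}_j = 0$ then collapses the left-hand side to $|\boldsymbol{Y}_i \cdot \boldsymbol{Y}_j|$, which is exactly the claim.

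I do not anticipate a real obstacle: the corollary is a one-line consequence of Theorem \ref{thm:oplip}. The only place a careful reader might balk is confirming that single-entry extraction is indeed $1$-Lipschitz in the \emph{spectral} norm rather than merely the Frobenius norm, which the inequality $|M_{ij}| \le \|M\|_2$ above settles. It is worth remarking that one could reach the same conclusion even more directly through $|M_{ij}| \le \|M\|_F = \sqrt{\varepsilon}$, bypassing the operator norm entirely; routing through Theorem \ref{thm:oplip} is nonetheless preferable, as it keeps this entry-wise guarantee uniform with the other spectral-functional corollaries drawn from the same theorem.
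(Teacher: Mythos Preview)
Your proposal is correct and matches the paper's approach exactly: the paper derives the corollary by instantiating Theorem~\ref{thm:oplip} with the entry-extraction functional $g_{ij}(M)=M_{ij}$ and $L_{g_{ij}}=1$, which is precisely what you do (with the added service of actually verifying that Lipschitz constant via $|e_i^\top(A-B)e_j|\le\|A-B\|_2$). Your closing observation that $|M_{ij}|\le\|M\|_F=\sqrt{\varepsilon}$ gives an even more direct route is a nice remark, though the paper does not pursue it.
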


\subsection{Rank preservation}
Denote by
$\lambda_1(\cdot) \ge \dots \ge \lambda_n(\cdot)$
the ordered eigenvalues of a symmetric matrix.
\begin{theorem}[Rank-$r$ preservation]\label{thm:rank}
Let $r = \operatorname{rank}(\boldsymbol{X})$ and
$\sigma_r(\boldsymbol{X})$ be its smallest non-zero singular value.
Assume $k \ge r$ and
\[
      \varepsilon
      <
      \sigma_r^{\,4}(\boldsymbol{X}).
\]
Then
\[
      \lambda_{n-r+1}\bigl(G_{\boldsymbol{Y}}\bigr)
      \ge
      \sigma_r^{\,2}(\boldsymbol{X}) - \sqrt{\varepsilon}
      > 0,
      \quad
      \operatorname{rank}\bigl(\boldsymbol{Y}\bigr) = r .
\]
\end{theorem}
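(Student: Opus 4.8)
The plan is to reduce the claim to a single eigenvalue–perturbation estimate for the two Gram matrices and then read off positivity of the relevant eigenvalue. First I would record the spectral structure of $G_{\boldsymbol{X}} = \boldsymbol{X}\boldsymbol{X}^\top$: it is symmetric positive semidefinite, its nonzero eigenvalues are exactly the squared singular values $\sigma_1^2(\boldsymbol{X}) \ge \dots \ge \sigma_r^2(\boldsymbol{X}) > 0$, and its remaining $n-r$ eigenvalues vanish, so that $\operatorname{rank}(G_{\boldsymbol{X}}) = \operatorname{rank}(\boldsymbol{X}) = r$. In particular the smallest nonzero eigenvalue of $G_{\boldsymbol{X}}$ equals $\sigma_r^2(\boldsymbol{X})$; this is the eigenvalue the statement labels $\lambda_{n-r+1}$.

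Next I would transfer this information to $G_{\boldsymbol{Y}} = \boldsymbol{Y}\boldsymbol{Y}^\top$. The key observation is that for each fixed index the map $M \mapsto \lambda_i(M)$ is $1$-operator-Lipschitz in the spectral norm — this is precisely Weyl's inequality — so \Cref{thm:oplip} with $L_g = 1$ applies verbatim and yields $|\lambda_i(G_{\boldsymbol{X}}) - \lambda_i(G_{\boldsymbol{Y}})| \le \sqrt{\varepsilon}$ for every $i$, since $\Delta = G_{\boldsymbol{X}} - G_{\boldsymbol{Y}}$ and $\|\Delta\|_2 \le \|\Delta\|_F = \sqrt{\varepsilon}$. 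Specializing to the eigenvalue carrying $\sigma_r^2(\boldsymbol{X})$ gives the one-sided bound $\lambda_{n-r+1}(G_{\boldsymbol{Y}}) \ge \sigma_r^2(\boldsymbol{X}) - \sqrt{\varepsilon}$. The hypothesis $\varepsilon < \sigma_r^4(\boldsymbol{X})$ is then exactly what forces $\sqrt{\varepsilon} < \sigma_r^2(\boldsymbol{X})$, making this eigenvalue strictly positive, which is the displayed chain of inequalities.

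From positivity I would draw the rank conclusion. Having $r$ eigenvalues of $G_{\boldsymbol{Y}}$ bounded below by $\sigma_r^2(\boldsymbol{X}) - \sqrt{\varepsilon} > 0$ shows that $G_{\boldsymbol{Y}}$, and hence $\boldsymbol{Y}$, has rank at least $r$; the assumption $k \ge r$ is what guarantees the embedding has enough coordinates for this lower bound to be attainable, i.e. consistent with $\operatorname{rank}(\boldsymbol{Y}) \le \min(n,k)$.

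The main obstacle is the reverse inequality needed for the exact equality $\operatorname{rank}(\boldsymbol{Y}) = r$ rather than merely $\operatorname{rank}(\boldsymbol{Y}) \ge r$. Weyl applied to the zero eigenvalues of $G_{\boldsymbol{X}}$ only gives $\lambda_i(G_{\boldsymbol{Y}}) \le \sqrt{\varepsilon}$ for the indices beyond $r$, and since $G_{\boldsymbol{Y}}$ is positive semidefinite these eigenvalues lie in $[0,\sqrt{\varepsilon}]$ but need not vanish, so nothing in the perturbation bound rules out spurious tiny positive eigenvalues when $k > r$. I would close this gap in one of two ways: either take the embedding dimension to be $k = r$, so that $\operatorname{rank}(\boldsymbol{Y}) \le k = r$ caps the rank from above and combines with the lower bound to give equality; or, for general $k \ge r$, interpret $\operatorname{rank}$ as the numerical rank at threshold $\sqrt{\varepsilon}$, counting only eigenvalues that exceed the perturbation scale, under which the $n-r$ near-zero eigenvalues are discarded by construction. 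I expect the faithful statement of the result to carry exactly this caveat, with the clean Weyl argument above supplying the substantive lower bound.
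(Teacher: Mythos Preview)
Your approach is exactly the paper's: apply Weyl's inequality to the Gram matrices, identify $\lambda_{n-r+1}(G_{\boldsymbol{X}})=\sigma_r^2(\boldsymbol{X})$, and use $\|\Delta\|_2\le\sqrt{\varepsilon}$ together with the hypothesis $\varepsilon<\sigma_r^4(\boldsymbol{X})$ to force positivity. The paper's proof is in fact briefer than yours and stops at the positivity step.

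Your worry about the reverse inequality $\operatorname{rank}(\boldsymbol{Y})\le r$ is well placed: the paper's proof does not address it at all. As you observe, Weyl only pins the trailing $n-r$ eigenvalues of $G_{\boldsymbol{Y}}$ into $[0,\sqrt{\varepsilon}]$, which does not force them to vanish when $k>r$, so the exact equality $\operatorname{rank}(\boldsymbol{Y})=r$ is not established by the argument as written. Your two proposed resolutions (take $k=r$, or read ``rank'' as numerical rank at tolerance $\sqrt{\varepsilon}$) are the natural ways to make the statement precise; the paper implicitly relies on one of these but does not say so.
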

\begin{proof}
Weyl’s inequality gives
\[
      \bigl|\lambda_{n-r+1}(G_{\boldsymbol{X}})
      - \lambda_{n-r+1}(G_{\boldsymbol{Y}})\bigr|
      \le
      \|\Delta\|_2
      \le
      \sqrt{\varepsilon}.
\]
But
$\lambda_{n-r+1}(G_{\boldsymbol{X}})=\sigma_r^{\,2}(\boldsymbol{X})$.
Positivity is therefore guaranteed when
$\varepsilon < \sigma_r^{\,4}(\boldsymbol{X})$.
\end{proof}

\subsection{Subspace preservation}
Let
$\mathcal{U}=\operatorname{range}\bigl(G_{\boldsymbol{X}}\bigr)$ and
$\mathcal{V}=\operatorname{range}\bigl(G_{\boldsymbol{Y}}\bigr)$,
both subspaces of $\mathbb{R}^n$ of dimension $r$.
Let $\Theta$ be their largest principal angle.
\begin{theorem}[Davis--Kahan angle]\label{thm:dk}
Under the hypotheses of Theorem~\ref{thm:rank},
\[
      \sin \Theta
      \le
      \frac{\sqrt{\varepsilon}}{\sigma_r^{\,2}(\boldsymbol{X})} .
\]
\end{theorem}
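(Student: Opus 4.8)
The plan is to recognize $\mathcal{U}$ and $\mathcal{V}$ as the top-$r$ eigenspaces of the two Gram matrices and then invoke the Davis--Kahan $\sin\Theta$ theorem with an eigengap read off from Theorem~\ref{thm:rank}. Since $\phi$ is the dot product, $G_{\boldsymbol{X}}$ and $G_{\boldsymbol{Y}}$ are symmetric positive semidefinite, so $\mathcal{U} = \operatorname{range}(G_{\boldsymbol{X}})$ is exactly the eigenspace spanned by the $r$ nonzero eigenvalues of $G_{\boldsymbol{X}}$, all of which are $\ge \lambda_{n-r+1}(G_{\boldsymbol{X}}) = \sigma_r^{\,2}(\boldsymbol{X})$, while $\mathcal{U}^\perp = \ker(G_{\boldsymbol{X}})$ carries eigenvalue $0$. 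The hypotheses of Theorem~\ref{thm:rank} ($k \ge r$ and $\varepsilon < \sigma_r^{\,4}(\boldsymbol{X})$) already guarantee $\operatorname{rank}(G_{\boldsymbol{Y}}) = r$, so $\mathcal{V} = \operatorname{range}(G_{\boldsymbol{Y}})$ is likewise the span of the $r$ nonzero eigenvalues of $G_{\boldsymbol{Y}}$ and $\mathcal{V}^\perp = \ker(G_{\boldsymbol{Y}})$ carries eigenvalue $0$.

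Next I would set up the spectral separation required by Davis--Kahan. Regard $\mathcal{U}$ as the spectral subspace of $G_{\boldsymbol{X}}$ for eigenvalues in $[\sigma_r^{\,2}(\boldsymbol{X}), \infty)$, and regard $\mathcal{V}^\perp$ as the spectral subspace of $G_{\boldsymbol{Y}} = G_{\boldsymbol{X}} - \Delta$ for the eigenvalue $0$. The eigenvalues attached to $\mathcal{V}^\perp$ (all equal to $0$) are separated from the interval $[\sigma_r^{\,2}(\boldsymbol{X}), \infty)$ by a gap of exactly $\delta = \sigma_r^{\,2}(\boldsymbol{X})$, since $0$ sits at distance $\sigma_r^{\,2}(\boldsymbol{X})$ below its left endpoint. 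This is precisely the configuration to which the $\sin\Theta$ theorem applies, with perturbation $E = \Delta$.

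Applying the spectral-norm form of the Davis--Kahan $\sin\Theta$ theorem --- which controls exactly the largest principal angle $\Theta$ --- then yields
\[
      \sin\Theta
      \le
      \frac{\|\Delta\|_2}{\delta}
      =
      \frac{\|\Delta\|_2}{\sigma_r^{\,2}(\boldsymbol{X})},
\]
and bounding $\|\Delta\|_2 \le \|\Delta\|_F = \sqrt{\varepsilon}$ (the same inequality used in Theorem~\ref{thm:oplip}) gives the claim.

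The only delicate point is the bookkeeping of the gap condition: Davis--Kahan requires the spectrum attached to one subspace to be separated from the spectrum attached to the other subspace's complement, and one must decide which matrix supplies which block. Here the clean choice is to pair the nonzero eigenvalues of $G_{\boldsymbol{X}}$ (bounded below by $\sigma_r^{\,2}(\boldsymbol{X})$) against the zero eigenvalues of $G_{\boldsymbol{Y}}$; the exactness of the latter --- the fact that $\mathcal{V}^\perp$ genuinely sits at eigenvalue $0$ rather than some small positive value --- is exactly what the conclusion $\operatorname{rank}(G_{\boldsymbol{Y}}) = r$ of Theorem~\ref{thm:rank} buys us, and it is what lets the gap be the full $\sigma_r^{\,2}(\boldsymbol{X})$ rather than a perturbed quantity. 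Once that pairing is fixed, no further eigenvalue estimate is needed and the bound follows with no additional constants.
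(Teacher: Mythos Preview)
Your proposal is correct and follows the same approach as the paper's proof, which simply reads ``Apply the Davis--Kahan $\sin\Theta$ theorem to the eigenspaces of $G_{\boldsymbol{X}}$ and $G_{\boldsymbol{Y}}$, using $\|\Delta\|_2 \le \sqrt{\varepsilon}$.'' You have unpacked exactly what the paper leaves implicit: identifying $\mathcal{U}$ and $\mathcal{V}$ as the top-$r$ eigenspaces, locating the eigengap at $\sigma_r^{2}(\boldsymbol{X})$ by pairing the nonzero spectrum of $G_{\boldsymbol{X}}$ against the kernel of $G_{\boldsymbol{Y}}$, and noting that Theorem~\ref{thm:rank} supplies the rank control that makes this pairing clean.
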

\begin{proof}
Apply the Davis--Kahan $\sin \Theta$ theorem
(e.g.\ \citealt{hsu2016dk}) to the eigenspaces of
$G_{\boldsymbol{X}}$ and $G_{\boldsymbol{Y}}$,
using $\|\Delta\|_2 \le \sqrt{\varepsilon}$.
\end{proof}

\subsection*{Remarks}
\begin{enumerate}[label=\arabic*., leftmargin=2em]
\item Bounds depend only on
    $\sqrt{\varepsilon}$ and the intrinsic scale
    $\sigma_r^{\,2}(\boldsymbol{X})$; no spurious constants appear.
\item Rank guarantees are stated in terms of the
    $(n-r+1)$-st eigenvalue, matching the fact that
    $G_{\boldsymbol{X}}$ has exactly $r$ positive eigenvalues when $r<n$.
\item Angles are measured between the ranges of the two Gram matrices, which both live in the common ambient space $\mathbb{R}^n$, so principal angles are well-defined even when $k \ll d$.
\end{enumerate}

\subsection{Kernel extensions }
\label{sec:kernel}
The guarantees above assume $\phi$ is the dot product, yielding the Gram matrix and direct connections to ambient vector-space properties. However, they extend to any symmetric relationship function $\phi$ for which $R(\boldsymbol{X})$ is positive semidefinite (PSD), including RBF and polynomial kernels. In such cases, Theorem~\ref{thm:oplip} and Corollary~\ref{cor:orth} apply directly, preserving entry-wise relationships (e.g., if $\phi(\boldsymbol{X}_i,\boldsymbol{X}_j)=0$ then $|\phi(\boldsymbol{Y}_i,\boldsymbol{Y}_j)|\le \sqrt{\varepsilon}$).

For rank and subspace guarantees, Theorems~\ref{thm:rank} and~\ref{thm:dk} hold with $G_{\boldsymbol{X}}, G_{\boldsymbol{Y}}$ replaced by $R(\boldsymbol{X}), \hat{R}(\boldsymbol{Y})$, $r=\operatorname{rank}(R(\boldsymbol{X}))$, and $\sigma_r^{\,2}(\boldsymbol{X})$ replaced by the smallest non-zero eigenvalue $\lambda_r(R(\boldsymbol{X}))$. Here, the preserved rank refers to the effective dimension of the feature space induced by $\phi$ (e.g., the RKHS rank for kernels), and the subspace distortion is between the eigenspaces of the relationship matrices in $\mathbb{R}^n$. These extensions rely on the same perturbation tools (Weyl, Davis--Kahan).

For kernels like cosine similarity or RBF, explicit Lipschitz constants may require bounded norms or distances. The following table lists conservative upper bounds $L$ relating kernel differences to embedding differences in bounded domains:

\begin{center}
\small
\begin{tabular}{lcc}
\toprule
Kernel & Example $L$ (Upper) & Notes\\
\midrule
Dot product $\boldsymbol{u}^\top \boldsymbol{v}$ & $R$ & $\|\boldsymbol{u}\|, \|\boldsymbol{v}\| \le R$ \\
Cosine $\frac{\boldsymbol{u}^\top \boldsymbol{v}}{\|\boldsymbol{u}\| \|\boldsymbol{v}\|}$ & $2/R_{\min}$ & $\|\boldsymbol{u}\|, \|\boldsymbol{v}\| \ge R_{\min} > 0$\\
RBF $\exp(-\gamma \|\boldsymbol{u}-\boldsymbol{v}\|^2)$ & $\sqrt{2\gamma / e}$ & Heuristic; depends on data range\\
\bottomrule
\end{tabular}
\end{center}

These constants are conservative upper bounds. For cosine, $2/R_{\min}$ arises from bounding derivatives under norm constraints. For RBF, $\sqrt{2\gamma/e}$ reflects the maximal slope of $\exp(-\gamma \|\mathbf{u}-\mathbf{v}\|^2)$. They ensure Lipschitz continuity on compact domains, though they may be loose in practice.

Note: these $L$ upper-bound kernel sensitivity to embedding perturbations. Inverse bounds (deriving embedding closeness from kernel closeness) require bi-Lipschitz assumptions on compact domains.

\section{Experimental Validation}
\label{sec:exp-results}

\paragraph{Setup.}
We evaluate RPL along two axes:
(i) quantitative retrieval performance on MS COCO 2017, and
(ii) qualitative preservation of manifold structure in synthetic stress–tests designed to expose angular distortions and foldovers.

\subsection{Cross–modal retrieval on MS COCO}
We compress ViT-H/14@336 embeddings (1024-D) to lower dimensions and evaluate cross–modal retrieval. 
\Cref{tab:results} reports Recall@$K$, median rank, and MRR@10 for the 1024-D baseline, a 768-D RPL projection, and ViT-L/14@336 (768-D) as reference. 
Despite a 25\% reduction in dimensionality, the RPL projection slightly improves Recall@1 (0.466 vs.\ 0.464) and MRR@10 (0.574 vs.\ 0.573), while leaving higher-$K$ retrieval unchanged. 
Median rank remains optimal (1.0). 
This indicates that RPL compression can preserve or even enhance retrieval quality.

\begin{table}[h]
\centering
\caption{Cross–modal retrieval on MS COCO.}
\label{tab:results}
\begin{tabular}{lccccccc}
\toprule
Embedding & Dim. & R@1 & R@5 & R@10 & R@100 & Med. Rank & MRR@10 \\
\midrule
ViT-H/14@336 & 1024 & 0.464 & 0.722 & 0.812 & 0.988 & 1.0 & 0.573 \\
RPL ViT-H/14@336 & 768 & 0.466 & 0.722 & 0.812 & 0.988 & 1.0 & 0.574 \\
ViT-L/14@336 & 768 & 0.386 & 0.633 & 0.737 & 0.978 & 2.0 & 0.492 \\
\bottomrule
\end{tabular}
\end{table}

We then compress more aggressively to 256-D (a $4\times$ reduction). 
\Cref{tab:vh_results} shows that naive RPL still achieves Recall@1 $\approx$ 0.453, close to the baseline. 
Importantly, Top-$k$ masking recovers performance (R@1 = 0.456), maintaining parity with the uncompressed model at larger $K$. 
Alternative weightings yield comparable but not superior results. 
These results confirm that RPL can sustain retrieval quality under substantial compression, especially when guided by relationship masking.

\begin{table}[h]
\centering
\caption{Masking strategies for ViT-H/14@336 compressed to 256-D.}
\label{tab:vh_results}
\begin{tabular}{lccccccc}
\toprule
Masking & Dim. & R@1 & R@5 & R@10 & R@100 & Med. Rank & MRR@10 \\
\midrule
Original & 1024 & 0.464 & 0.722 & 0.812 & 0.988 & 1.0 & 0.573 \\
None & 256 & 0.453 & 0.709 & 0.803 & 0.987 & 1.0 & 0.562 \\
Top-$k$ & 256 & 0.456 & 0.711 & 0.806 & 0.987 & 1.0 & 0.564 \\
Weighted & 256 & 0.453 & 0.707 & 0.806 & 0.988 & 1.0 & 0.563 \\
Linear & 256 & 0.433 & 0.694 & 0.793 & 0.987 & 1.0 & 0.544 \\
Gaussian & 256 & 0.453 & 0.710 & 0.805 & 0.988 & 1.0 & 0.562 \\
\bottomrule
\end{tabular}
\end{table}

\subsection{Qualitative manifold preservation}
To probe preservation of vector–space structure beyond retrieval numbers, we embed synthetic 2D manifolds into $\mathbb{R}^{24}$ and train a three–layer MLP with RPL to project into $\mathbb{R}^{3}$. Colors encode a latent parameter common to each row; distortions manifest as mixed gradients or foldovers.

\begin{figure*}[t]
  \centering

  \begin{subfigure}[t]{0.31\textwidth}
    \centering
    \includegraphics[width=\linewidth]{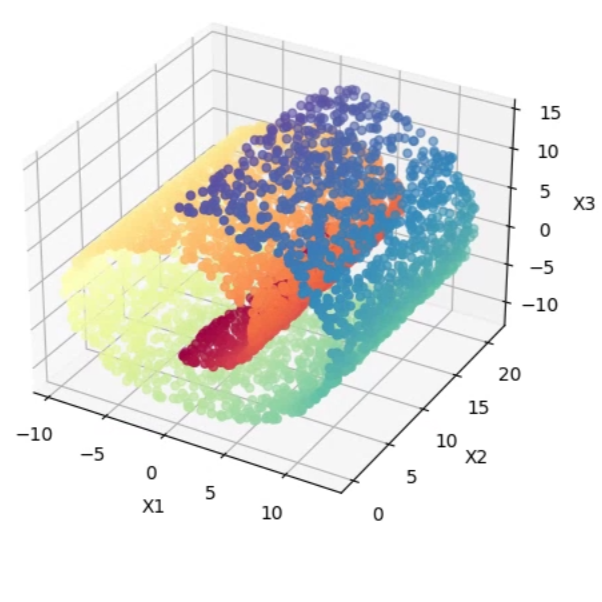}
    \caption{Dataset A: Original}
    \label{fig:cin-orig}
  \end{subfigure}\hfill
  \begin{subfigure}[t]{0.31\textwidth}
    \centering
    \includegraphics[width=\linewidth]{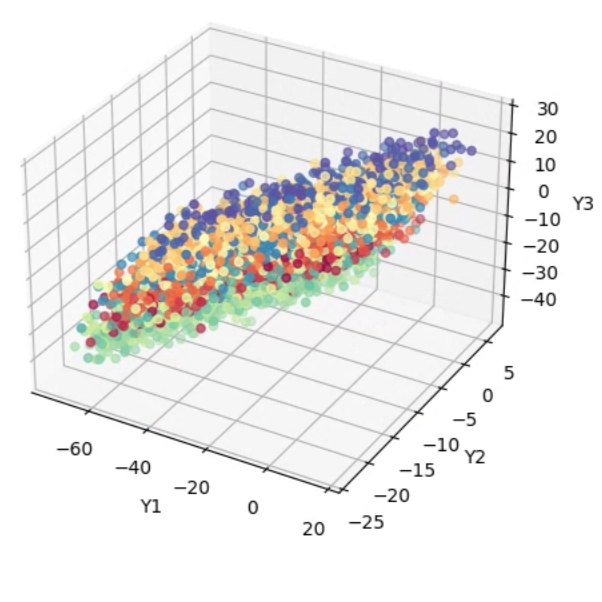}
    \caption{Dataset A: Random network}
    \label{fig:cin-rnd}
  \end{subfigure}\hfill
  \begin{subfigure}[t]{0.31\textwidth}
    \centering
    \includegraphics[width=\linewidth]{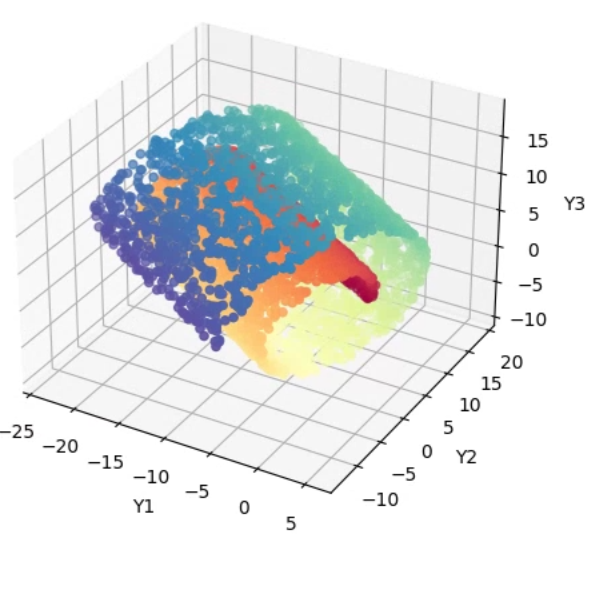}
    \caption{Dataset A: RPL-trained}
    \label{fig:cin-learn}
  \end{subfigure}

  \vspace{0.8em}

  \begin{subfigure}[t]{0.31\textwidth}
    \centering
    \includegraphics[width=\linewidth]{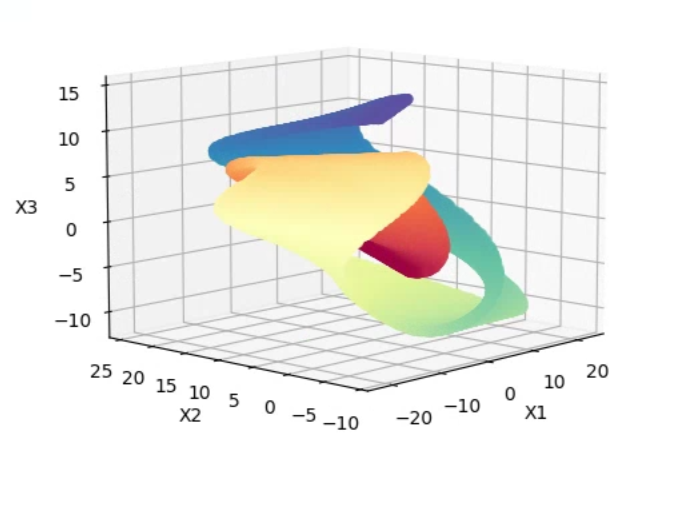}
    \caption{Dataset B: Original}
    \label{fig:comp-orig}
  \end{subfigure}\hfill
  \begin{subfigure}[t]{0.31\textwidth}
    \centering
    \includegraphics[width=\linewidth]{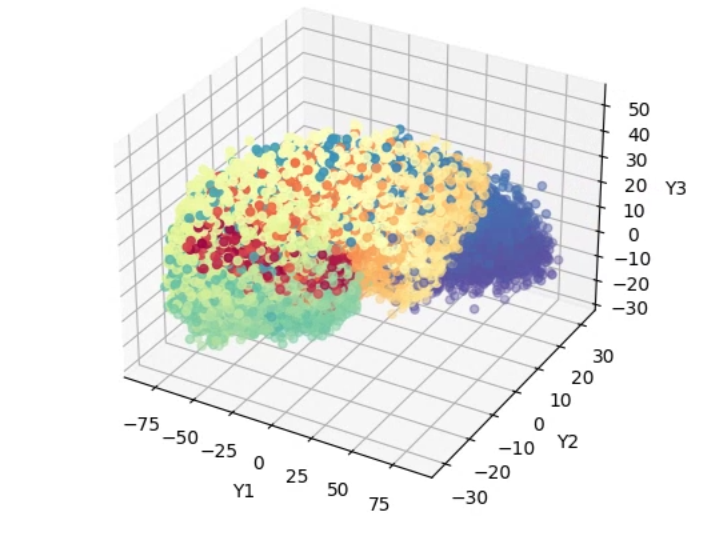}
    \caption{Dataset B: Random network}
    \label{fig:comp-rnd}
  \end{subfigure}\hfill
  \begin{subfigure}[t]{0.31\textwidth}
    \centering
    \includegraphics[width=\linewidth]{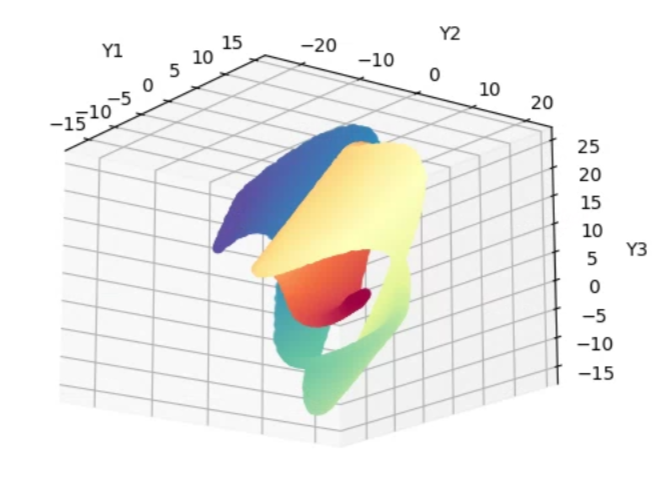}
    \caption{Dataset B: RPL-trained}
    \label{fig:comp-learn}
  \end{subfigure}

  \caption{\textbf{Qualitative manifold projections.}
  Rows: datasets A (top) and B (bottom).
  Columns: original manifold; projection from a randomly initialized network; and RPL-trained projection.
  Colors encode a latent parameter to reveal angular coherence and foldovers.
  RPL-trained projections preserve global ordering and suppress distortions compared to random networks; in Dataset~B the RPL result is a mirrored realization of the same manifold.}
  \label{fig:rpl-subfigs}
\end{figure*}

\paragraph{Interpretation.}
For Dataset~A (the ``cinnamon roll''), RPL recovers the rolled geometry and latent ordering, avoiding the collapse seen in random networks.
For Dataset~B (the twisted surface), RPL faithfully reconstructs the global topology but in a mirrored orientation relative to the original.
This invariance is expected: $\mathcal{L}_{\text{RPL}}$ is insensitive to the full orthogonal group $O(k)$, which includes both rotations (determinant $+1$) and reflections (determinant $-1$).
Thus RPL preserves manifold topology and relative relationships but not absolute orientation or handedness, consistent with the perturbation-theoretic guarantees in \cref{sec:vsp}.

\section{Conclusion}
The Relationship Preserving Loss (RPL) framework advances dimensionality reduction by providing a flexible, neural network-based approach to preserving key vector space properties, including orthogonality, linear independence, and subspace structure. By minimizing discrepancies in customizable relationship matrices (e.g., Gram, cosine, or kernel-based), RPL ensures that non-linear projections maintain these properties with quantifiable fidelity. Our theoretical analysis, grounded in matrix perturbation theory, establishes rigorous error bounds: Serfling's inequality links mini-batch observations to global Frobenius error; operator-Lipschitz continuity guarantees entry-wise preservation (e.g., orthogonality within $\sqrt{\varepsilon}$); Weyl's inequality secures rank preservation when $\varepsilon < \sigma_r^{\,4}(\boldsymbol{X})$; and the Davis--Kahan theorem bounds subspace distortion by $\sin \Theta \leq \sqrt{\varepsilon}/\sigma_r^{\,2}(\boldsymbol{X})$. These results extend kernel-agnostically to PSD relationship functions, broadening applicability while leveraging the same spectral tools.
By integrating sparse masking and mini-batch scalability, RPL not only generalizes classical methods like MDS and kernel PCA but also enables practical deployment in high-dimensional settings. While preliminary empirical checks suggest compression without performance degradation, the core contribution lies in the mathematical framework's provable guarantees, which provide a foundation for reliable embeddings in tasks demanding structural integrity. Future work includes extensive experiments, comparisons with other methods, and scalability tests.


\end{document}